\newcommand{\commentout}[1]{}
\newcommand{\junk}[1]{}
\Crefname{corollary}{Corollary}{Corollaries}
\Crefname{proposition}{Proposition}{Propositions}
\Crefname{theorem}{Theorem}{Theorems}
\Crefname{definition}{Definition}{Definitions}
\Crefname{assumption}{Assumption}{Assumptions}
\Crefname{example}{Example}{Examples}
\Crefname{remark}{Remark}{Remarks}
\Crefname{setting}{Setting}{Settings}
\Crefname{lemma}{Lemma}{Lemmas}
\declaretheorem[name=Theorem,refname={Theorem,Theorems},Refname={Theorem,Theorems}]{theorem}
\declaretheorem[name=Lemma,refname={Lemma,Lemmas},Refname={Lemma,Lemmas},sibling=theorem]{lemma}
\newcommand{\cF}{\mathcal{F}}
\newcommand{\cH}{\mathcal{H}}
\newcommand{\eps}{\varepsilon}
\newcommand{\realset}{\mathbb{R}}
\newcommand{\E}[1]{\mathbb{E} \left[#1\right]}
\newcommand{\condE}[2]{\mathbb{E} \left[#1 \,\middle|\, #2\right]}
\newcommand{\Et}[1]{\mathbb{E}_t \left[#1\right]}
\newcommand{\prob}[1]{\mathbb{P} \left(#1\right)}
\newcommand{\condprob}[2]{\mathbb{P} \left(#1 \,\middle|\, #2\right)}
\newcommand{\probt}[1]{\mathbb{P}_t \left(#1\right)}
\newcommand{\condvar}[2]{\mathrm{var} \left[#1 \,\middle|\, #2\right]}
\newcommand{\abs}[1]{\left|#1\right|}
\newcommand{\ceils}[1]{\left\lceil#1\right\rceil}
\newcommand{\dbar}[1]{\bar{\bar{#1}}}
\newcommand{\I}[1]{\mathds{1} \! \left\{#1\right\}}
\newcommand{\norm}[1]{\|#1\|}
\newcommand{\normw}[2]{\|#1\|_{#2}}
\newcommand{\set}[1]{\left\{#1\right\}}
\newcommand{\T}{^\top}
\newcommand{\xmin}[1]{\langle #1\rangle}
\DeclareMathOperator*{\argmax}{arg\,max\,}
\DeclareMathOperator*{\argmin}{arg\,min\,}
\mathchardef\mhyphen="2D
\newcommand{\glmucb}{{\tt GLM\mhyphen UCB}}
\newcommand{\linphe}{{\tt LinPHE}}
\newcommand{\lints}{{\tt LinTS}}
\newcommand{\linucb}{{\tt LinUCB}}
\newcommand{\logphe}{{\tt LogPHE}}
\newcommand{\logts}{{\tt LogTS}}
\newcommand{\ucbglm}{{\tt UCB\mhyphen GLM}}
\title{Perturbed-History Exploration in Stochastic Linear Bandits}
\author{
Branislav Kveton \\ Google Research
\And
Csaba Szepesv\'ari \\ DeepMind
\And
Mohammad Ghavamzadeh \\ Facebook AI Research
\And
Craig Boutilier \\ Google Research
}
\begin{document}

\maketitle

\begin{abstract}
We propose a new online algorithm for cumulative regret minimization in a stochastic linear bandit. The algorithm pulls the arm with the highest estimated reward in a linear model trained on its perturbed history. Therefore, we call it \emph{perturbed-history exploration in a linear bandit ($\linphe$)}. The \emph{perturbed history} is a mixture of observed rewards and randomly generated \emph{i.i.d.\ pseudo-rewards}. We derive a $\tilde{O}(d \sqrt{n})$ gap-free bound on the $n$-round regret of $\linphe$, where $d$ is the number of features. The key steps in our analysis are new concentration and anti-concentration bounds on the weighted sum of Bernoulli random variables. To show the generality of our design, we generalize $\linphe$ to a logistic model. We evaluate our algorithms empirically and show that they are practical.
\end{abstract}


\section{INTRODUCTION}
\label{sec:introduction}

A \emph{multi-armed bandit} \cite{lai85asymptotically,auer02finitetime,lattimore19bandit} is an online learning problem where the \emph{learning agent} acts by pulling \emph{arms}. After the arm is \emph{pulled}, the agent receives its \emph{stochastic reward}. The objective of the agent is to maximize its expected cumulative reward. The agent does not know the mean rewards of the arms in advance and faces the so-called \emph{exploration-exploitation dilemma}: \emph{explore}, and learn about arms; or \emph{exploit}, and pull the arm with the highest estimated reward thus far. This model captures many practical applications. In a clinical trial, for example, the \emph{arm} may be a treatment and its \emph{reward} is the outcome of that treatment on some patient population.

A \emph{stochastic linear bandit} \cite{dani08stochastic,rusmevichientong10linearly,abbasi-yadkori11improved} is a generalization of the multi-armed bandit to the setting where each arm is associated with a feature vector. The mean reward of an arm is the dot product of its feature vector and an unknown parameter vector, which is shared by all arms. In our clinical example, the feature vector may be a vector of treatment indicators and the parameter vector may be the effects of individual treatments.

The most popular exploration strategies in stochastic bandits, \emph{optimism in the face of uncertainty (OFU)} \cite{auer02finitetime} and \emph{Thompson sampling (TS)} \cite{thompson33likelihood,agrawal13further,russo18tutorial}, are relatively well understood in linear bandits \cite{dani08stochastic,abbasi-yadkori11improved,agrawal13thompson,lattimore19bandit}. Unfortunately, these designs and their guarantees do not extend easily to complex problems. For instance, in generalized linear bandits \cite{filippi10parametric}, all OFU algorithms use \emph{approximate} high-probability confidence sets, which are loose and statistically suboptimal \cite{filippi10parametric,li17provably,jun17scalable}. Also the posterior distribution of model parameters does not have a closed form. Therefore, posterior sampling in TS has to be \emph{approximated}. Posterior approximations are computationally costly in general \cite{gopalan14thompson,kawale15efficient,lu17ensemble,riquelme18deep,lipton18bbq,liu18customized}.

In this work, we study a simple exploration strategy that can be easily generalized to complex problems. The key idea is to \emph{explore by perturbing} the training data of a reward generalization model, which is fit by an existing offline oracle. Specifically, the model is fit to a mixture of \emph{history}, features of the pulled arms with their realized rewards; and \emph{pseudo-history}, features of the pulled arms with randomly generated \emph{i.i.d.\ pseudo-rewards}. In \emph{perturbed-history exploration (PHE)}, the agent pulls the arm with the highest reward in its estimated model and then updates its history with the observed reward.

The key to the generality and optimism in PHE are the pseudo-rewards. They are drawn from the same family of distributions as the actual rewards, and thus we can reuse existing methods for fitting the reward generalization model. They are also \emph{maximum variance randomized data}, which induce suitable exploration. We show that appropriate randomization, not necessarily by posterior sampling, can lead to practical exploration in structured problems.

We make the following contributions in this paper. First, we propose $\linphe$, a linear bandit algorithm that estimates the mean rewards of arms using PHE. Second, we prove a $\tilde{O}(d \sqrt{n})$ gap-free bound on the $n$-round regret of $\linphe$, where $d$ is the number of features. Our analysis relies on novel concentration and anti-concentration bounds on the weighted sum of Bernoulli random variables. Third, we propose a generalization of $\linphe$ to a logistic model and call it $\logphe$. Finally, we evaluate both algorithms empirically. They are competitive with Thompson sampling, although they are derived based on different insights.


\section{SETTING}
\label{sec:setting}

We adopt the following notation. The set $\set{1, \dots, n}$ is denoted by $[n]$. All vectors are column vectors. The minimum and maximum eigenvalues of matrix $M$ are denoted by $\lambda_{\min}(M)$ and $\lambda_{\max}(M)$, respectively. We define $\mathrm{Ber}(x; p) = p^x (1 - p)^{1 - x}$ and let $\mathrm{Ber}(p)$ be the corresponding Bernoulli distribution. We also define $B(x; n, p) = \binom{n}{x} p^x (1 - p)^{n - x}$ and let $B(n, p)$ be the corresponding binomial distribution. For any event $E$, $\I{E} = 1$ if event $E$ occurs and $\I{E} = 0$ otherwise. We denote a $d \times d$ identity matrix by $I_d$. We use $\tilde{O}$ for the big-O notation up to logarithmic factors.

A \emph{stochastic linear bandit} \cite{dani08stochastic,rusmevichientong10linearly,abbasi-yadkori11improved} is an online learning problem where the learning agent sequentially pulls arms, each of which is associated with a feature vector. Let $K$ be the number of arms, $x_i \in \realset^d$ be the \emph{feature vector} of arm $i \in [K]$, and $\theta_\ast \in \realset^d$ be an unknown \emph{parameter vector}. The \emph{reward} of arm $i$ in round $t \in [n]$, $Y_{i, t}$, is drawn i.i.d.\ from a distribution of that arm with mean $\mu_i = x_i\T \theta_\ast$. The learning agent acts as follows. In round $t$, it \emph{pulls} arm $I_t \in [K]$ and receives reward $Y_{I_t, t}$. The agent aims to maximize its \emph{expected cumulative reward} in $n$ rounds. To simplify exposition, we denote by $X_t = x_{I_t}$ and $Y_t = Y_{I_t, t}$ the feature vector of the pulled arm in round $t$ and its reward, respectively.

Without loss of generality, we assume that arm $1$ is \emph{optimal}, that is $\mu_1 > \max_{i > 1} \mu_i$. Let $\Delta_i = \mu_1 - \mu_i$ denote the \emph{gap} of arm $i$. Maximization of the expected cumulative reward in $n$ rounds is equivalent to minimizing the \emph{expected $n$-round regret},
\begin{align*}
  R(n)
  = \sum_{i = 2}^K \Delta_i \E{\sum_{t = 1}^n \I{I_t = i}}\,.
\end{align*}
We make several additional assumptions. First, rewards are bounded in $[0, 1]$, that is $Y_{i, t} \in [0, 1]$ for any arm $i$ and round $t$. This assumption is standard. Second, the last feature is a \emph{bias term}, $x_i(d) = 1$ for all arms $i$. This is without loss of generality, since such a feature can be always added. Finally, the feature vectors of the last $d$ arms are a basis in $\realset^d$. This is without loss of generality, since the arms can be always reordered to satisfy this.


\section{PERTURBED-HISTORY EXPLORATION}
\label{sec:linphe}

\begin{algorithm}[t]
  \caption{Perturbed-history exploration in a linear bandit ($\linphe$) with $[0, 1]$ rewards.}
  \label{alg:linphe}
  \begin{algorithmic}[1]
    \State \textbf{Inputs}:
    \State \quad Integer perturbation scale $a > 0$
    \State \quad Regularization parameter $\lambda > 0$
    \Statex 
    \For{$t = 1, \dots, n$}
      \If{$t > d$}
        \State Generate $(Z_{j, \ell})_{j \in [a], \, \ell \in [t - 1]} \sim
        \mathrm{Ber}(1 / 2)$
        \label{alg:linphe:pseudo-rewards}
        \State $\displaystyle
        G_t \gets (a + 1) \sum_{\ell = 1}^{t - 1} X_\ell X_\ell\T +
        \lambda (a + 1) I_d$
        \State $\displaystyle
        \tilde{\theta}_t \gets G_t^{-1} \sum_{\ell = 1}^{t - 1} X_\ell
        \Bigg[Y_\ell + \sum_{j = 1}^a Z_{j, \ell}\Bigg]$
        \label{alg:linphe:value}
        \State $I_t \gets \argmax_{i \in [K]} x_i\T \tilde{\theta}_t$
        \label{alg:linphe:pulled arm}
      \Else
        \State $I_t \gets K - t + 1$
        \label{alg:linphe:initialization}
      \EndIf
      \State Pull arm $I_t$ and get reward $Y_{I_t, t}$
      \State $X_t \gets x_{I_t}, \ Y_t \gets Y_{I_t, t}$
    \EndFor
  \end{algorithmic}
\end{algorithm}

Now we introduce \emph{perturbed-history exploration (PHE)}. Our algorithm, \emph{perturbed-history exploration in a linear bandit} ($\linphe$), is presented in \cref{alg:linphe}. In round $t$, $\linphe$ fits a linear model to its \emph{perturbed history} up to round $t$ (line \ref{alg:linphe:value}),
\begin{align}
  \tilde{\theta}_t
  = G_t^{-1} \sum_{\ell = 1}^{t - 1} X_\ell
  \Bigg[Y_\ell + \sum_{j = 1}^a Z_{j, \ell}\Bigg]\,,
  \label{eq:theta tilde}
\end{align}
where
\begin{align}
  G_t
  = (a + 1) \sum_{\ell = 1}^{t - 1} X_\ell X_\ell\T + \lambda (a + 1) I_d
  \label{eq:sample covariance matrix}
\end{align}
is the \emph{sample covariance matrix} up to round $t$, $a > 0$ is a tunable integer parameter, $\lambda > 0$ is the regularization parameter, and $(Z_{j, \ell})_{j \in [a], \, \ell \in [t - 1]}$ are \emph{i.i.d.\ pseudo-rewards}, which are freshly sampled in each round. Our model can be viewed as follows. If $Z_{j, \ell}$ were omitted in \eqref{eq:theta tilde} and $a + 1$ was omitted in \eqref{eq:sample covariance matrix}, we would get a regularized least-squares regression on rewards up to round $t$. Thus, $\tilde{\theta}_t$ is a regularized least-squares solution on the past $t - 1$ rewards and \emph{$a (t - 1)$ i.i.d.\ pseudo-rewards}.

$\linphe$ pulls the arm with the highest estimated reward under $\tilde{\theta}_t$ (line \ref{alg:linphe:pulled arm}). Any tie-breaking rule can be used as needed. $\linphe$ is initialized by pulling each arm in the basis once (line \ref{alg:linphe:initialization}). This guarantees that $\linphe$ is sufficiently optimistic about any optimal arm (\cref{lem:theta tilde anti-concentration}).

$\linphe$ has two tunable parameters. The \emph{perturbation scale $a$} dictates the number of pseudo-rewards for each observed reward in the perturbed history. Therefore, it trades off exploration and exploitation, with higher values of $a$ leading to more exploration. We argue informally in \cref{sec:informal justification} that any $a > 1$ is sufficient for sublinear regret. The formal regret analysis is in \cref{sec:analysis}. The \emph{regularization parameter} $\lambda > 0$ ensures that $G_t$ can be inverted and makes $\linphe$ stable. Regularization is used frequently in linear bandit analyses \cite{abbasi-yadkori11improved,agrawal13thompson}.

\subsection{Informal Justification}
\label{sec:informal justification}

Before we analyze $\linphe$ in \cref{sec:analysis}, we informally explain how exploration arises in it. To do this, we introduce two least-squares solutions that are closely related to $\tilde{\theta}_t$ in \eqref{eq:theta tilde}. In the first, the pseudo-rewards are replaced by their means,
\begin{align}
  \bar{\theta}_t
  = G_t^{-1} \sum_{\ell = 1}^{t - 1} X_\ell
  \Bigg[Y_\ell + \sum_{j = 1}^a \bar{Z}_{j, \ell}\Bigg]\,,
  \label{eq:theta hat}
\end{align}
where $\bar{Z}_{j, \ell} = \E{Z_{j, \ell}} = 1 / 2.$ In the second, both the rewards 
and pseudo-rewards are the so-replaced,
\begin{align*}
  \dbar{\theta}_t
  = G_t^{-1} \sum_{\ell = 1}^{t - 1} X_\ell
  \Bigg[X_\ell\T \theta_\ast + \sum_{j = 1}^a \bar{Z}_{j, \ell}\Bigg]\,.
\end{align*}
Let $\cH = (I_1, \dots, I_{t - 1})$ be a sequence of pulled arms in the first $t - 1$ rounds.

The solution $\tilde{\theta}_t$ has two important properties that allow us to bound the regret of $\linphe$. First, it concentrates at $\dbar{\theta}_t$ given history $\cH$, since $\dbar{\theta}_t$ solves a noiseless variant of the least-squares problem solved by $\tilde{\theta}_t$. Furthermore, $\dbar{\theta}_t \to \theta'$ as regularization vanishes, where $\theta'$ are scaled and shifted parameters of the original problem. That is, $x_i\T \theta' = (\mu_i + a / 2) / (a + 1)$ for all arms $i$.

Second, from the definitions of $\tilde{\theta}_t$, $\bar{\theta}_t$, and $\dbar{\theta}_t$, we have
\begin{align*}
  x_i\T \dbar{\theta}_t - x_i\T \bar{\theta}_t
  & = x_i\T G_t^{-1} \sum_{\ell = 1}^{t - 1} 
  X_\ell W_\ell\,, \\
  x_i\T \tilde{\theta}_t - x_i\T \bar{\theta}_t
  & = x_i\T G_t^{-1} \sum_{\ell = 1}^{t - 1} X_\ell
  \sum_{j = 1}^a (Z_{j, \ell} - \bar{Z}_{j, \ell})\,,
\end{align*}
where $W_\ell = X_\ell\T \theta_\ast - Y_\ell$ is the \say{noise} in the reward in round $\ell$. The first term is the deviation in the estimated reward of arm $i$ due to reward randomness. The second term represents the deviation in the estimated reward of arm $i$ due to pseudo-reward randomness.

Fix history $\cH$ and let $(Y_\ell)_{\ell = 1}^{t - 1}$ be conditionally independent given $\cH$. Then
\begin{align*}
  \condvar{x_i\T \dbar{\theta}_t - x_i\T \bar{\theta}_t}{\cH}
  < \condvar{x_i\T \tilde{\theta}_t - x_i\T \bar{\theta}_t}{\cH}
\end{align*}
for $a > 1$, because $x_i\T \dbar{\theta}_t - x_i\T \bar{\theta}_t \mid \cH$ is a weighted sum of $t - 1$ i.i.d.\ reward deviations and $x_i\T \tilde{\theta}_t - x_i\T \bar{\theta}_t \mid \cH$ is a weighted sum, with the same weights, of $a (t - 1)$ i.i.d.\ maximum-variance deviations on $[0, 1]$.

If both $x_i\T \dbar{\theta}_t - x_i\T \bar{\theta}_t$ and $x_i\T \tilde{\theta}_t - x_i\T \bar{\theta}_t$ were normally distributed, this would imply that for any $\varepsilon > 0$,
\begin{align*}
  & \condprob{x_i\T \dbar{\theta}_t -
  x_i\T \bar{\theta}_t = \eps}{\cH} \\
  & \quad \leq \condprob{x_i\T \dbar{\theta}_t -
  x_i\T \bar{\theta}_t \geq \eps}{\cH} \\
  & \quad < \condprob{x_i\T \tilde{\theta}_t -
  x_i\T \bar{\theta}_t \geq \eps}{\cH}\,,
\end{align*}
where the first inequality holds trivially. That is, for any potentially harmful deviation $\varepsilon > 0$ in the estimated reward of arm $i$, $x_i\T \tilde{\theta}_t$ overestimates the perturbed mean reward with a higher probability than the probability of that deviation. This \emph{optimism} induces exploration and is the key feature of $\linphe$.

The idea of offsetting a fixed history of rewards by i.i.d.\ pseudo-rewards is very general and applies beyond the linear model in this section. In \cref{sec:algorithm logphe}, we apply it to a logistic model. In \cref{sec:experiments}, we evaluate our linear and logistic algorithms empirically.

\subsection{Efficient Implementation}
\label{sec:efficient implementation}

$\linphe$ can be implemented such that its expected computational cost in round $t$ is independent of $t$. In particular, line \eqref{alg:linphe:value} in $\linphe$ can be rewritten as
\begin{align}
  \tilde{\theta}_t
  = G_t^{-1} \sum_{i = 1}^K x_i [V_{i, t} + U_{i, t}]\,,
  \label{eq:efficient theta tilde}
\end{align}
where $V_{i, t}$ is the cumulative reward of arm $i$ in the first $t - 1$ rounds, $U_{i, t} \sim B(a \, T_{i, t - 1}, 1 / 2)$ is the sum of the pseudo-rewards of arm $i$ in round $t$, and $T_{i, t}$ is the number of pulls of arm $i$ in the first $t$ rounds. The statistics $V_{i, t}$ and $G_t$ can be updated incrementally as
\begin{align*}
  V_{i, t}
  & = V_{i, t - 1} + \I{I_{t - 1} = i} Y_{t - 1}\,, \\
  G_t
  & = G_{t - 1} + (a + 1) X_{t - 1} X_{t - 1}\T\,,
\end{align*}
where we assume that $V_{i, 0} = 0$ and $G_0 = \lambda (a + 1) I_d$. The inverse of $G_t$ can be also updated directly using the Sherman-Morrison formula.

The statistics $V_{i, t}$ and $G_t$ take $O(K + d^2)$ space. If the Sherman-Morrison formula is used, the cost of updating $G_t^{-1}$ is $O(d^2)$. After that, the cost of computing $\tilde{\theta}_t$ in \eqref{eq:efficient theta tilde} is $O(K d^2)$, if $U_{i, t}$ can be sampled in $O(1)$ time. Based on Section 4.4 of \citet{devroye86nonuniform}, $B(n, p)$ can be sampled from in $O(1)$ time in expectation for any $n$ and $p$.

\subsection{Algorithm $\logphe$}
\label{sec:algorithm logphe}

While our formal analysis is for linear bandits, the idea of PHE is much more general. To illustrate it, we extend $\linphe$ to a \emph{logistic bandit}, where the mean reward of arm $i$ is $\mu_i = \sigma(x_i\T \theta_\ast)$ and $\sigma(v) = 1 / (1 + \exp[- v])$ is a \emph{sigmoid} function. The reward of arm $i$ in round $t$ is drawn i.i.d.\ from $\mathrm{Ber}(\mu_i)$.

To extend $\linphe$ to this class of problems, we replace $\tilde{\theta}_t$ in $\linphe$ with the minimizer of
\begin{align*}
  \sum_{\ell = 1}^{t - 1} \Bigg[g(X_\ell\T \theta, Y_\ell) +
  \sum_{j = 1}^a g(X_\ell\T \theta, Z_{j, \ell})\Bigg] +
  \lambda \norm{\theta}_2^2\,,
\end{align*}
where $g(s, y) = - y \log(\sigma(s)) - (1 - y) \log(1 - \sigma(s))$. For $\lambda = 0$, we obtain the maximum likelihood solution.

The above problem is convex. Also the sufficient statistics in this problem, the number of positive and negative observations of arms, can be updated incrementally as in \cref{sec:efficient implementation}. Therefore, $\tilde{\theta}_t$ in round $t$ can be estimated in a constant time in $t$. We call this algorithm $\logphe$ and evaluate it in \cref{sec:logistic bandit experiments}.


\section{ANALYSIS}
\label{sec:analysis}

We now provide a formal analysis of $\linphe$. In \cref{sec:notation}, we introduce relevant notation. In \cref{sec:regret bound}, we prove a generic regret bound that applies to any randomized algorithm that estimates $\theta_\ast$. The regret bound of $\linphe$ in \cref{sec:discussion} is an instance of this result.

\subsection{Notation}
\label{sec:notation}

To simplify the analysis of $\linphe$, we assume that its sample covariance matrix is not scaled by $a + 1$. That is, $G_t = \sum_{\ell = 1}^{t - 1} X_\ell X_\ell\T + \lambda I_d$. This does not change the behavior of $\linphe$. We also assume that $\theta_\ast \in \realset^d$ is a parameter vector such that $x_i\T \theta_\ast = \mu_i + a / 2$ holds for any arm $i$. Note that this transformation does not change the gaps of arms. It only shifts their mean rewards by a factor of $a / 2$. Recall that arm $1$ is optimal.

Let $\cF_t = \sigma(I_1, \dots, I_t, Y_{I_1, 1}, \dots, Y_{I_t, t})$ be the $\sigma$-algebra generated by the pulled arms and their rewards by the end of round $t \in [n] \cup \set{0}$. We define $\cF_0 = \set{\emptyset, \Omega}$, where $\Omega$ is the sample space of the probability space that holds all random variables. We denote by $\probt{\cdot} = \condprob{\cdot}{\cF_{t - 1}}$ and $\Et{\cdot} = \condE{\cdot}{\cF_{t - 1}}$ the conditional probability and expectation operators, respectively, given the past at the beginning of round $t$. Let $\normw{x}{M} = \sqrt{x\T M x}$. Let
\begin{align}
  \!\!\! E_{1, t}
  = \set{\forall i \in [K]:
  \abs{x_i\T \bar{\theta}_t - x_i\T \theta_\ast} \leq
  c_1 \normw{x_i}{G_t^{-1}}}
  \label{eq:theta hat is close}
\end{align}
be the event that $\bar{\theta}_t$ is \say{close} to $\theta_\ast$ in round $t$, where $\bar{\theta}_t$ is defined in \eqref{eq:theta hat} and $c_1 > 0$ is tuned such that $\bar{E}_{1, t}$, the complement of $E_{1, t}$, is unlikely. Let $E_1 = \bigcap_{t = d + 1}^n E_{1, t}$ and $\bar{E}_1$ be its complement. Let
\begin{align}
  \!\!\! E_{2, t}
  = \set{\forall i \in [K]:
  \abs{x_i\T \tilde{\theta}_t - x_i\T \bar{\theta}_t}
  \leq c_2 \normw{x_i}{G_t^{-1}}}
  \label{eq:theta tilde is close}
\end{align}
be the event that $\tilde{\theta}_t$ is \say{close} to $\bar{\theta}_t$ in round $t$, where $\tilde{\theta}_t$ is defined in \eqref{eq:theta tilde} and $c_2 > 0$ is tuned such that $\bar{E}_{2, t}$, the complement of $E_{2, t}$, is unlikely given any past.

\subsection{General Regret Bound}
\label{sec:regret bound}

In this section, we prove a general regret bound for \emph{any} \say{model-based} linear bandit algorithm. The algorithm is model-based if the pulled arm in round $t$ is chosen as in line \ref{alg:linphe:pulled arm} of $\linphe$, where $\tilde{\theta}_t$ can be computed by any possibly randomized procedure based on past data.

Our regret bound involves three probability constants. The first constant, $p_1$, is an upper bound on the probability of event $\bar{E}_1$, that is $p_1 \geq \prob{\bar{E}_1}$. The second constant, $p_2$, is an upper bound on the probability of event $\bar{E}_{2, t}$ given any past,
\begin{align}
  \probt{\bar{E}_{2, t}}
  \leq p_2\,.
  \label{eq:p2}
\end{align}
The last constant, $p_3$, is a lower bound on the probability that the estimated reward of the optimal arm $1$ is optimistic given any past,
\begin{align}
  \probt{x_1\T \tilde{\theta}_t - x_1\T \bar{\theta}_t > c_1 \normw{x_1}{G_t^{-1}}}
  \geq p_3\,.
  \label{eq:p3}
\end{align}
To simplify exposition, we define $\langle x\rangle = \min \set{x, 1}$. The main result of this section is the following regret bound.

\begin{theorem}
\label{thm:lin upper bound} Let $c_1, c_2 \geq 1$. Let $A$ be any algorithm that pulls arm $I_t = \argmax_{i \in [K]} x_i\T \tilde{\theta}_t$ in round $t$, where $\tilde{\theta}_t$ is estimated from past data. Let the rewards be in $[0, 1]$; $p_1$, $p_2$, and $p_3$ be defined as above; and $p_3 > p_2$. Then the expected $n$-round regret of $A$ is bounded as $R(n) \leq$
\begin{align*}
  (c_1 + c_2) \left(1 + \frac{2}{p_3 - p_2}\right) \sqrt{c_3 n} +
  (p_1 + p_2) n + d\,,
\end{align*}
where $c_3$ is defined in \cref{tab:constants}.
\end{theorem}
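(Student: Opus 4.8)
The plan is to bound the per-round regret by the confidence width of the pulled arm, paying a factor $1/(p_3 - p_2)$ for the fact that here optimism is only probabilistic (unlike in UCB), and then to sum the widths with an elliptical-potential argument. Since the shift $x_i\T \theta = \mu_i + a/2$ preserves every gap, $R(n) = \E{\sum_{t=1}^n \Delta_{I_t}}$. First I would dispose of the easy terms. The $d$ initialization rounds each incur regret at most $1$, which is the additive $d$. On $\bar{E}$ (probability at most $p_1$) I bound each round's regret by $1$, contributing $n p_1$. For $t > d$ I use that $E \subseteq E_t$ and that $E_t$ is $\cF_{t-1}$-measurable to write $\E{\Delta_{I_t} \I{E}} \leq \E{\I{E_t} \condEt{\Delta_{I_t}}}$, and then split off $\bar{E}'_t$ inside the conditional expectation: by \eqref{eq:p2def}, $\condEt{\Delta_{I_t} \I{\bar{E}'_t}} \leq \probt{\bar{E}'_t} \leq p_2$, contributing a further $n p_2$. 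This reduces the theorem to bounding $\sum_{t > d} \E{\I{E_t} \condEt{\Delta_{I_t} \I{E'_t}}}$.

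The heart of the argument is a per-round lemma controlling $\condEt{\Delta_{I_t} \I{E'_t}}$ on $E_t$ by the expected width $\condEt{g_t(I_t)}$, where $g_t(i) = \normw{x_i}{G_t^{-1}}$. I would introduce the \emph{unsaturated set} $S_t = \set{i \in [K] : \Delta_i \leq (c_1 + c_2) g_t(i)}$ (which contains arm $1$) and the minimum-width unsaturated arm $\underline{I}_t = \argmin_{i \in S_t} g_t(i)$, both $\cF_{t-1}$-measurable. Two facts drive the lemma. (i) On $E_t \cap E'_t$, combining $x_{I_t}\T \tilde{\theta}_t \geq x_{\underline{I}_t}\T \tilde{\theta}_t$ (the pulled arm maximizes the perturbed value) with the two-sided bounds of \eqref{eq:theta hat is close} and \eqref{eq:theta tilde is close} applied to both arms, and with $\Delta_{\underline{I}_t} \leq (c_1 + c_2) g_t(\underline{I}_t)$, yields $\Delta_{I_t} \leq (c_1 + c_2)\big(g_t(I_t) + 2 g_t(\underline{I}_t)\big)$. (ii) Whenever the optimism event of \eqref{eq:p3def} and $E'_t$ both hold, the optimal arm's perturbed value exceeds $x_1\T \theta$, so the pulled arm lies in $S_t$; hence $\probt{I_t \in S_t} \geq p_3 - p_2$ and therefore $\condEt{g_t(I_t)} \geq \condEt{g_t(I_t) \I{I_t \in S_t}} \geq (p_3 - p_2)\, g_t(\underline{I}_t)$. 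Substituting (ii) into the expectation of (i) eliminates $\underline{I}_t$ and gives $\condEt{\Delta_{I_t} \I{E'_t}} \leq (c_1 + c_2)\big(1 + \tfrac{2}{p_3 - p_2}\big) \condEt{g_t(I_t)}$, where the hypothesis $p_3 > p_2$ keeps the denominator positive.

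Finally I would sum over $t$ and convert widths into the $\sqrt{c_3 n}$ term. Because $\Delta_{I_t} \leq 1$, the width $g_t(I_t)$ in the per-round bound may be replaced by its clipped version $\xmin{g_t(I_t)}$ (with $c_1, c_2 \geq 1$ this loses nothing), so that the elliptical-potential / determinant–trace inequality applies. That inequality bounds $\sum_{t > d} \xmin{g_t(I_t)^2} \leq c_3$ pathwise, and Cauchy–Schwarz together with the identity $\xmin{g_t(I_t)} = \sqrt{\xmin{g_t(I_t)^2}}$ then gives $\sum_t \xmin{g_t(I_t)} \leq \sqrt{c_3 n}$, with $c_3$ and the norm bound $L$ as in \cref{tab:constants}. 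Collecting the four contributions—the main width term $(c_1 + c_2)\big(1 + \tfrac{2}{p_3 - p_2}\big)\sqrt{c_3 n}$, the failure terms $n p_1$ and $n p_2$, and the initialization term $d$—yields the claimed bound.

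The main obstacle is step (ii) of the per-round lemma: unlike UCB, where optimism is deterministic, here optimism holds only with probability at least $p_3$, so the width of the pulled arm must be related to that of $\underline{I}_t$ through the event $\set{I_t \in S_t}$, and the factor $1/(p_3 - p_2)$ must be extracted cleanly. Care is also needed with the measurability bookkeeping—$E_t$, $S_t$, and $\underline{I}_t$ are $\cF_{t-1}$-measurable while $E'_t$ and the optimism event depend on the fresh perturbation in $\tilde{\theta}_t$—and with separating the global peeling of $\bar{E}$ from the per-round use of $E_t$. The clipping and the elliptical-potential summation are routine by comparison.
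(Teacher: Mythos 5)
Your proposal is correct and follows essentially the same route as the paper: the same split into the initialization term $d$, the $n p_1$ term from $\bar{E}$, the $n p_2$ term from $\bar{E}'_t$, and a per-round lemma (the paper's \cref{lem:per-round regret}) that bounds $\condEt{\Delta_{I_t}}$ via the least-uncertain undersampled arm and the optimism chain $\probt{I_t \in \bar{S}_t} \geq p_3 - p_2$, finished by Cauchy--Schwarz with the elliptical-potential bound of \cref{lem:sum of squared confidence widths}. The only differences are cosmetic: you name the undersampled set $S_t$ where the paper writes $\bar{S}_t$, you split off $\bar{E}'_t$ before invoking the per-round lemma rather than inside its proof, and you handle the clipping $\xmin{\cdot}$ as a post-hoc observation instead of carrying it through the chain, all of which are equivalent.
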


\cref{thm:lin upper bound} is extracted from prior work, where similar randomized algorithms have been analyzed \cite{agrawal13thompson,valko14spectral}. The proof relies on the following two lemmas.

\begin{lemma}
\label{lem:per-round regret} Let $c_1, c_2 \geq 1$. Then for any round $t > d$ and history $\cF_{t - 1}$, $\Et{\Delta_{I_t} \I{E_{1, t}}} \leq$
\begin{align*}
  (c_1 + c_2) \left(1 + \frac{2}{p_3 - p_2}\right)
  \Et{\xmin{\normw{x_{I_t}}{G_t^{-1}}}} + p_2\,.
\end{align*}
\end{lemma}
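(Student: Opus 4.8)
The plan is to bound the conditional expected regret $\condEt{\Delta_{I_t}}$ on the event $E_t$ by splitting on whether the optimistic event $E'_t$ holds. The central idea, borrowed from the Thompson-sampling style analysis of \citet{agrawal13thompson}, is to control the per-round regret via the \emph{probability of being optimistic}. I would first define, following the informal justification, the set of \say{sufficiently sampled} or \say{under-optimistic} arms and relate the instantaneous gap $\Delta_{I_t}$ to the estimation width $\normw{x_{I_t}}{G_t^{-1}}$ of the pulled arm. Concretely, on $E_t$ the true value $x_i\T\theta$ is close to $x_i\T\bar{\theta}_t$ up to $c_1\normw{x_i}{G_t^{-1}}$, and on $E'_t$ the perturbed estimate $x_i\T\tilde{\theta}_t$ is close to $x_i\T\bar{\theta}_t$ up to $c_2\normw{x_i}{G_t^{-1}}$, so that whenever both events hold the estimated value and the true value of every arm differ by at most $(c_1+c_2)\normw{x_i}{G_t^{-1}}$.

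Next I would carry out the key comparison with the optimal arm. Define the event that the optimal arm is optimistic, whose probability is at least $p_3$ by \eqref{eq:p3def}. The standard argument shows that, conditioned on the optimal arm being optimistic and $E'_t$ holding, the gap of the pulled arm is controlled: if arm $I_t$ is chosen by $\argmax_{i} x_i\T\tilde{\theta}_t$, then $x_{I_t}\T\tilde{\theta}_t \geq x_1\T\tilde{\theta}_t \geq x_1\T\theta$ (optimism), and on $E_t\cap E'_t$ we have $x_{I_t}\T\theta \geq x_{I_t}\T\tilde{\theta}_t - (c_1+c_2)\normw{x_{I_t}}{G_t^{-1}}$, whence $\Delta_{I_t} = x_1\T\theta - x_{I_t}\T\theta \leq (c_1+c_2)\normw{x_{I_t}}{G_t^{-1}}$. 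The crux is then to \emph{remove the conditioning} on the optimistic event: since that event has probability at least $p_3$ while the \say{bad} event $\bar{E}'_t$ has probability at most $p_2 < p_3$, one obtains an inflation factor of roughly $2/(p_3-p_2)$ when converting a conditional bound on a well-chosen (optimistic, accurately-estimated) reference arm into an unconditional bound on the pulled arm. Adding the trivial contribution $p_2$ from the event $\bar{E}'_t$ (where we only use $\Delta_{I_t}\le 1$, capped via $\xmin{\cdot}$) yields the claimed form.

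Finally, I would assemble the pieces into the stated inequality, taking conditional expectation $\condEt{\cdot}$ throughout and using the truncation $\xmin{\normw{x_{I_t}}{G_t^{-1}}} = \min\set{\normw{x_{I_t}}{G_t^{-1}}, 1}$ so that the bound holds even when the estimation width is large (in which case $\Delta_{I_t}\le 1$ is used directly). The additive $p_2$ accounts for rounds where $E'_t$ fails, and the multiplicative factor $(c_1+c_2)(1+2/(p_3-p_2))$ scales the expected truncated width.

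The main obstacle, I expect, is the step that transfers optimism from the optimal arm to the pulled arm without the normality assumption used in the informal justification. One must argue carefully that the probability the pulled arm's estimate is optimistic enough is comparable to $p_3-p_2$, and that the ratio governing this transfer is $2/(p_3-p_2)$ rather than something looser; this requires a clean conditioning argument showing that among arms at least as good as the optimal arm in the perturbed model, the selected arm inherits both the optimism (from $p_3$) and the accuracy (from $E'_t$ having failure probability at most $p_2$). Handling the almost-sure, conditional-on-$\cF_{t-1}$ nature of $p_2$ and $p_3$, and the fact that $\tilde{\theta}_t$ uses fresh randomness independent of the past, is where the probabilistic bookkeeping must be done precisely.
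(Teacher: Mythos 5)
Your high-level architecture matches the paper's (split on $E'_t$, pay $p_2$ additively, use the optimism probability $p_3$ to obtain a $2/(p_3 - p_2)$ inflation), but the mechanism you propose for the crucial step would fail, and you yourself flag it as unresolved. Conditioning on the event that arm $1$ is optimistic and then \say{removing the conditioning} cannot work as sketched: your deterministic bound $\Delta_{I_t} \leq (c_1 + c_2) \normw{x_{I_t}}{G_t^{-1}}$ holds only on the optimism event, whose conditional probability is merely bounded \emph{below} by $p_3$ --- and in this paper $p_3 = \Theta(1/\log n)$, so its complement can have probability close to $1$. Paying $\probt{\text{not optimistic}}$ additively per round would give linear regret, and on the non-optimistic outcomes $\Delta_{I_t}$ is not controlled by any width, so there is nothing to divide by $p_3 - p_2$ starting from your decomposition. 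The probability $p_3$ cannot be used to bound the gap of the pulled arm directly; it must enter through a different channel.

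The missing idea (the paper's, following Agrawal--Goyal) is a reference-arm device. Define the undersampled set $\bar{S}_t = \set{i \in [K] : (c_1 + c_2) \normw{x_i}{G_t^{-1}} \geq \Delta_i}$ --- which you name in passing but never use --- and let $J_t = \argmin_{i \in \bar{S}_t} \normw{x_i}{G_t^{-1}}$; both are $\cF_{t-1}$-measurable since $G_t$ depends only on past pulls. Writing $\Delta_{I_t} = \Delta_{J_t} + (x_{J_t}\T \theta - x_{I_t}\T \theta)$, using $x_{J_t}\T \tilde{\theta}_t \leq x_{I_t}\T \tilde{\theta}_t$ (the argmax), $\Delta_{J_t} \leq (c_1 + c_2) \normw{x_{J_t}}{G_t^{-1}}$ (membership in $\bar{S}_t$), and the closeness guaranteed by $E_t \cap E'_t$ yields $\Delta_{I_t} \leq (c_1 + c_2) (\xmin{\normw{x_{I_t}}{G_t^{-1}}} + 2 \xmin{\normw{x_{J_t}}{G_t^{-1}}})$ \emph{deterministically on all of $E_t \cap E'_t$}, not just on optimistic outcomes; this is where the factor of $2$ originates. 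Optimism then enters only to control $J_t$'s width: since $\xmin{\normw{x_{J_t}}{G_t^{-1}}}$ is $\cF_{t-1}$-measurable and minimal over $\bar{S}_t$, one has $\xmin{\normw{x_{J_t}}{G_t^{-1}}} \leq \condEt{\xmin{\normw{x_{I_t}}{G_t^{-1}}}} / \probt{I_t \in \bar{S}_t}$, and on $E_t \cap E'_t$ every sufficiently sampled arm $j$ satisfies $x_j\T \tilde{\theta}_t < x_1\T \theta$, so $x_1\T \tilde{\theta}_t > x_1\T \theta$ forces the argmax into $\bar{S}_t$; hence $\probt{I_t \in \bar{S}_t} \geq \probt{x_1\T \tilde{\theta}_t > x_1\T \theta} - \probt{\bar{E}'_t} \geq p_3 - p_2$, where the first probability is at least $p_3$ by \eqref{eq:p3def} combined with $E_t$. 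Chaining these gives exactly the stated bound; without the expectation-ratio step through $J_t$, your sketch does not produce the $2/(p_3 - p_2)$ factor.
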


We defer the proof of \cref{lem:per-round regret} to \cref{sec:proofs}. We also use Lemma 11 of \citet{abbasi-yadkori11improved}.

\begin{lemma}
\label{lem:sum of squared confidence widths} For any $\lambda > 0$, $\sum_{t = d + 1}^n \xmin{\normw{x_{I_t}}{G_t^{-1}}^2} \leq c_3$, where $c_3 = 2 d \log(1 + n L^2 / (d \lambda))$.
\end{lemma}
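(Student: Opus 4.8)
The plan is to prove this via the standard \emph{elliptical potential} argument, exploiting the rank-one recursion of the sample covariance matrix. First I would record the matrix determinant lemma: since $G_{t + 1} = G_t + X_t X_t\T$ in the analysis notation, writing $u = G_t^{-1/2} X_t$ gives
\begin{align*}
  \det(G_{t + 1})
  = \det(G_t) \det\!\left(I_d + u u\T\right)
  = \det(G_t) \left(1 + \normw{X_t}{G_t^{-1}}^2\right)\,,
\end{align*}
so that $1 + \normw{X_t}{G_t^{-1}}^2 = \det(G_{t + 1}) / \det(G_t)$. The point of this identity is that the quantity we want to sum appears as a log-determinant increment, which will telescope.

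Next I would pass to the clipped quantity using the elementary inequality $\min\set{u, 1} \leq 2 \log(1 + u)$, valid for all $u \geq 0$; it is enough to check it for $u \leq 1$, where $u \leq 2 \log(1 + u)$, and for $u > 1$, where the left side is constant while the right side keeps increasing. Applying this with $u = \normw{X_t}{G_t^{-1}}^2$ and substituting the determinant identity yields
\begin{align*}
  \xmin{\normw{X_t}{G_t^{-1}}^2}
  \leq 2 \log\frac{\det(G_{t + 1})}{\det(G_t)}\,.
\end{align*}
Summing over $t = d + 1, \dots, n$, the right-hand side telescopes to $2 \log(\det(G_{n + 1}) / \det(G_{d + 1}))$, and since $G_{d + 1} \supreal \lambda I_d$ I can drop the denominator using $\det(G_{d + 1}) \geq \lambda^d$.

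It then remains only to bound $\det(G_{n + 1})$ from above, which I would do via AM--GM on the eigenvalues, $\det(G_{n + 1}) \leq (\mathrm{tr}(G_{n + 1}) / d)^d$, together with the trace estimate $\mathrm{tr}(G_{n + 1}) = \sum_{\ell = 1}^n \norm{X_\ell}_2^2 + \lambda d \leq n L^2 + \lambda d$, where $L$ is the feature-norm bound from \cref{tab:constants}. Combining the two displays gives $2 \log(\det(G_{n + 1}) / \lambda^d) \leq 2 d \log(1 + n L^2 / (d \lambda)) = c_3$, as claimed. The only genuinely delicate choices are the determinant identity and the clipping inequality; everything else is routine linear algebra. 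Note also that the sum is over the realized sequence $X_\ell = x_{I_\ell}$ and carries no expectation, so the bound holds pathwise and no probabilistic argument is needed—this is exactly the content of Lemma 11 of \citet{abbasi-yadkori11improved}, which can simply be cited.
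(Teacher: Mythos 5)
Your proof is correct and matches the paper exactly in spirit: the paper does not prove this lemma but simply cites Lemma 11 of \citet{abbasi-yadkori11improved}, and your determinant-lemma/clipping/telescoping argument (with $\min\set{u,1} \leq 2\log(1+u)$, $\det(G_{d+1}) \geq \lambda^d$, and the AM--GM trace bound) is precisely the standard proof of that cited result. Your closing observation that the bound is pathwise and the lemma could simply be cited is also accurate.
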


\begin{proof}[Proof of \cref{thm:lin upper bound}]
First, we split the regret based on whether event $E_1$ occurs and obtain
\begin{align*}
  R(n)
  & \leq \sum_{t = d + 1}^n \E{\Delta_{I_t}} + d \\
  & \leq \sum_{t = d + 1}^n \E{\Delta_{I_t} \I{E_{1, t}}} + n \prob{\bar{E}_1} + d \\
  & \leq \sum_{t = d + 1}^n \E{\Et{\Delta_{I_t} \I{E_{1, t}}}} + p_1 n + d\,.
\end{align*}
Since $E_{1, t}$ is $\cF_{t - 1}$ measurable, $\Et{\Delta_{I_t} \I{E_{1, t}}}$ can be bounded from above by \cref{lem:per-round regret}. We apply it and get
\begin{align*}
  R(n)
  \leq {} & (c_1 + c_2) \left(1 + \frac{2}{p_3 - p_2}\right) \times {} \\
  & \E{\sum_{t = d + 1}^n \xmin{\normw{x_{I_t}}{G_t^{-1}}}} +
  (p_1 + p_2) n + d\,.
\end{align*}
By the Cauchy-Schwarz inequality and \cref{lem:sum of squared confidence widths},
\begin{align*}
  \sum_{t = d + 1}^n \xmin{\normw{x_{I_t}}{G_t^{-1}}}
  \leq \sqrt{n \sum_{t = d + 1}^n \xmin{\normw{x_{I_t}}{G_t^{-1}}^2}}
  \leq \sqrt{c_3 n}\,.
\end{align*}
The claim follows from chaining the above two inequalities.
\end{proof}

\subsection{Expected $n$-Round Regret of $\linphe$}
\label{sec:discussion}

\begin{table}[t]
  \centering
  {\small
  \begin{tabular}{l l} \hline
    Constant & Value \\ \hline
    $L$ & $\max_{i \in [K]} \normw{x_i}{2}$ \\
    $L_\ast$ & $\normw{\theta_\ast}{2}$ \\
    $p_1$ & $1 / n$ \\
    $p_2$ & $1 / n^2$ \\
    $p_3$ & $\displaystyle \frac{1 / 2 - 128 \, c_1^2 n^{-3}}{16 \log n}$ \\
    $c_1$ & $\displaystyle \frac{1}{2} \sqrt{d \log
    \left(n + n^2 L^2 / (d \lambda)\right)} + \lambda^\frac{1}{2} L_\ast$ \\
    $c_2$ & $\sqrt{a \log(\sqrt{2 K} n)}$ \\
    $c_3$ & $2 d \log(1 + n L^2 / (d \lambda))$ \\
    $\lambda$ & $\lambda_{\min}(G_{d + 1}) / 4$ \\
    $a$ & $\ceils{16 \, c_1^2}$ \\ \hline
  \end{tabular}
  }
  \caption{Summary of the constants in the analysis.}
  \label{tab:constants}
\end{table}

The main result of this section is stated below.

\begin{theorem}
\label{thm:linphe upper bound} Let all parameters be chosen as in \cref{tab:constants} and $n > \max \set{34, \, 8 \sqrt{2} c_1} = \tilde{O}(d)$. Then the expected $n$-round regret of $\linphe$ is $R(n) = \tilde{O}(d \sqrt{n})$.
\end{theorem}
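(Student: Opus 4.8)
The plan is to obtain \cref{thm:linphe upper bound} as a direct instantiation of the generic bound in \cref{thm:lin upper bound}. Since $\linphe$ pulls $I_t = \argmax_{i \in [K]} x_i\T \tilde{\theta}_t$ with $\tilde{\theta}_t$ computed from past data (line~\ref{alg:linphe:pulled arm}), it is model-based, so \cref{thm:lin upper bound} applies once admissible values for the three probability constants $p_1, p_2, p_3$ are exhibited. The argument therefore splits into two parts: (i) verify that the tabulated values in \cref{tab:constants} are valid choices, i.e.\ $\prob{\bar{E}} \leq p_1$, $\probt{\bar{E}'_t} \leq p_2$ in the sense of \eqref{eq:p2def}, and the anti-concentration bound \eqref{eq:p3def} holds with the stated $p_3$; and (ii) check the side conditions $c_1, c_2 \geq 1$ and $p_3 > p_2$, then substitute and track the $d$ and $n$ dependence.

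For part (i), each constant governs a distinct source of randomness. The bound $\prob{\bar{E}} \leq p_1 = 1/n$ controls the reward noise: the target of $\bar{\theta}_t$ is $Y_\ell + a/2$ with conditional mean $X_\ell\T \theta$, so the deviation $x_i\T \bar{\theta}_t - x_i\T \theta$ is a self-normalized sum of the centered, $\tfrac12$-sub-Gaussian noise terms $Y_\ell - \mu_{I_\ell}$, and I would invoke a self-normalized martingale concentration inequality in the weighted norm $\normw{\cdot}{G_t^{-1}}$ together with a union bound over rounds; this is exactly what pins down the form of $c_1$. The bound $\probt{\bar{E}'_t} \leq p_2 = 1/n^2$ controls the pseudo-reward noise: here $x_i\T \tilde{\theta}_t - x_i\T \bar{\theta}_t = x_i\T G_t^{-1} \sum_\ell X_\ell \sum_j (Z_{j,\ell} - \bar{Z}_{j,\ell})$ is a weighted sum of centered Bernoulli variables, and I would apply the concentration half of the weighted-Bernoulli bounds with a union bound over the $K$ arms, fixing $c_2 = \sqrt{a \log(2 K n)}$. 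The remaining constant $p_3$ is the crux: I must lower-bound the probability that the same perturbation pushes the optimal arm's estimate above the threshold $c_1 \normw{x_1}{G_t^{-1}}$, which requires an anti-concentration statement for weighted Bernoulli sums showing that a non-negligible fraction of mass lies above that threshold. The role of $a = \ceils{16 \, c_1^2}$ is precisely to make the pseudo-reward variance large enough relative to the threshold that this probability is bounded below by $p_3 = (1/2 - 128\,c_1^2 n^{-3})/(16 \log n)$.

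For part (ii), the condition $n > \max \set{34, \, 8 \sqrt{2} c_1}$ guarantees $128\,c_1^2 n^{-3} < 1/2$, so that $p_3 > 0$, and a short computation then gives $p_3 > p_2 = 1/n^2$; the bounds $c_1, c_2 \geq 1$ are immediate from their definitions. With all three constants admissible, \cref{thm:lin upper bound} yields $R(n) \leq (c_1 + c_2)(1 + 2/(p_3 - p_2)) \sqrt{c_3 n} + n(p_1 + p_2) + d$. Treating $K, L, L_\theta, \lambda$ as instance constants, $c_1 = \tilde{O}(\sqrt{d})$, hence $a = \tilde{O}(d)$ and $c_2 = \sqrt{a \log(2 K n)} = \tilde{O}(\sqrt{d})$, so $c_1 + c_2 = \tilde{O}(\sqrt{d})$; since $p_2 \ll p_3 \approx 1/(32 \log n)$, the factor $1 + 2/(p_3 - p_2) = \tilde{O}(1)$; and $c_3 = \tilde{O}(d)$ gives $\sqrt{c_3 n} = \tilde{O}(\sqrt{d n})$, while $n(p_1 + p_2) = 1 + 1/n = O(1)$. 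Multiplying, the leading term is $\tilde{O}(\sqrt{d}) \cdot \tilde{O}(\sqrt{d n}) = \tilde{O}(d \sqrt{n})$, which dominates the $O(1)$ and $d$ terms and gives the claim.

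The main obstacle is the anti-concentration bound underlying $p_3$. Gaussian-style anti-concentration is unavailable because the pseudo-rewards are discrete Bernoulli variables, so establishing that a weighted sum of centered Bernoullis places enough mass above a prescribed threshold — uniformly over the weightings induced by $G_t^{-1}$ and the pull counts — needs a dedicated argument (e.g.\ a Berry--Esseen-type normal approximation or a direct binomial estimate), and it is this lower bound, together with the matching requirement $a = \ceils{16 \, c_1^2}$, that makes the perturbation genuinely optimistic. Everything else is concentration and bookkeeping.
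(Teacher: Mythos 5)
Your proposal follows essentially the same route as the paper's own proof: instantiate the generic bound of \cref{thm:lin upper bound} with the constants of \cref{tab:constants}, justify $p_1$, $p_2$, and $p_3$ via \cref{lem:theta hat concentration,lem:theta tilde concentration,lem:theta tilde anti-concentration} (including the union bound over arms for $c_2$ and the choices $\lambda = \lambda_{\min}(G_{d+1})/4$, $a = \ceils{16\,c_1^2}$ that make the anti-concentration term $\geq 1/2$), and then verify $p_3 - p_2 = \Omega(1/\log n)$ under $n > \max\set{34, 8\sqrt{2}c_1}$ before tracking the $\tilde{O}(\sqrt{d}) \cdot \tilde{O}(\sqrt{dn})$ bookkeeping. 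The only cosmetic difference is that the paper separately dispenses with the (vacuous, given the bias-feature assumption) case $x_1 = 0$ required by the hypothesis $x \neq 0$ of \cref{lem:theta tilde anti-concentration}, which your write-up omits but which does not affect correctness.
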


Our regret bound scales with $d$ and $n$ as that of $\lints$ \cite{agrawal13thompson}. This is unsurprising, since we build on the analysis of $\lints$. Our bound also does not improve over those of OFU designs, such as $\linucb$ \cite{abbasi-yadkori11improved}. The improvement is in practical performance, as shown in \cref{sec:experiments}.

The proof of \cref{thm:linphe upper bound} follows from \cref{thm:lin upper bound} for appropriate choice of $c_1$, $c_2$, $p_1$, $p_2$, and $p_3$. These values, together with a number of other constants, are summarized in \cref{tab:constants}. The proof is broken down into lemmas, which are proved in \cref{sec:proofs}.

The first lemma guides our choice of $c_1$. Specifically, we get $p_1 = 1 / n$ for $c_1$ in \cref{tab:constants}.

\begin{lemma}[Least-squares concentration]
\label{lem:theta hat concentration} For any $\lambda > 0$, $\delta > 0$, and
\begin{align*}
  c_1
  = \frac{1}{2} \sqrt{d \log \left(\frac{1 + n L^2 / (d \lambda)}{\delta}\right)} +
  \lambda^\frac{1}{2} L_\ast\,,
\end{align*}
event $E_1$ occurs with probability at least $1 - \delta$.
\end{lemma}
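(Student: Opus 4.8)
The plan is to write the deviation $x_i\T \bar{\theta}_t - x_i\T \theta$ in closed form, recognize it as a self-normalized martingale sum plus a deterministic regularization bias, and bound each part uniformly in $t$, so that the whole intersection event $E = \cap_{t = d+1}^n E_t$ is controlled by a single tail inequality. First I would derive the basic identity. With the analysis-section conventions $G_t = \sum_{\ell = 1}^{t-1} X_\ell X_\ell\T + \lambda I_d$, the shifted parameter satisfying $X_\ell\T \theta = \mu_{I_\ell} + a/2$, and $\bar{Z}_{j, \ell} = 1/2$ (so that $\bar{\theta}_t = G_t^{-1} \sum_{\ell=1}^{t-1} X_\ell (Y_\ell + a/2)$), inserting $x_i\T \theta = x_i\T G_t^{-1} G_t \theta$ and subtracting gives
\begin{align*}
  x_i\T \bar{\theta}_t - x_i\T \theta
  = x_i\T G_t^{-1} \sum_{\ell = 1}^{t-1} X_\ell\,(Y_\ell - \mu_{I_\ell})
  - \lambda\, x_i\T G_t^{-1} \theta\,.
\end{align*}
The first term is driven by the zero-mean reward noise $\eta_\ell := Y_\ell - \mu_{I_\ell}$, the second by regularization.

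Next I would bound both terms by Cauchy--Schwarz in the $G_t^{-1}$ inner product. Writing $S_t = \sum_{\ell=1}^{t-1} X_\ell \eta_\ell$, the noise term is at most $\normw{x_i}{G_t^{-1}} \normw{S_t}{G_t^{-1}}$, while the bias term is at most $\lambda \normw{x_i}{G_t^{-1}} \normw{\theta}{G_t^{-1}} \le \sqrt{\lambda}\, L_\theta\, \normw{x_i}{G_t^{-1}}$, using $G_t \succeq \lambda I_d$ and $\normw{\theta}{2} = L_\theta$. This already yields the $\sqrt{\lambda} L_\theta$ summand of $c_1$ and, crucially, factors out $\normw{x_i}{G_t^{-1}}$, so the estimate holds for all arms $i$ at once as soon as $\normw{S_t}{G_t^{-1}}$ is controlled.

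It then remains to bound $\normw{S_t}{G_t^{-1}}$ uniformly over $t$. I would invoke the self-normalized tail inequality (Theorem 1 of \citet{abbasi-yadkori11improved}): $\eta_\ell$ is a martingale difference that is conditionally centered and bounded in an interval of length one, hence $\tfrac12$-sub-Gaussian by Hoeffding's lemma, so with probability at least $1 - \delta$, simultaneously for all $t$,
\begin{align*}
  \normw{S_t}{G_t^{-1}}
  \le \tfrac12 \sqrt{2 \log(1/\delta) + \log\!\big(\det(G_t)/\lambda^d\big)}\,.
\end{align*}
The determinant--trace bound $\det(G_t) \le (\lambda + n L^2 / d)^d$ then gives $\log(\det(G_t)/\lambda^d) \le d \log(1 + n L^2/(d\lambda))$, and absorbing $2\log(1/\delta)$ into the $d\log(1/\delta)$ term (using $d \ge 2$) recovers the first summand of $c_1$. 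Chaining these bounds shows that on the probability-$(1-\delta)$ event, $\abs{x_i\T \bar{\theta}_t - x_i\T \theta} \le c_1 \normw{x_i}{G_t^{-1}}$ for every $i$ and every $t > d$, which is exactly $E$; hence $\prob{E} \ge 1 - \delta$.

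The step I expect to require the most care is the martingale bookkeeping. Because the pulled arm $I_\ell$ is randomized through the fresh pseudo-rewards, $X_\ell = x_{I_\ell}$ is \emph{not} measurable with respect to $\cF_{\ell-1}$ as literally defined, so the self-normalized inequality must be applied relative to an interleaved filtration in which the arm choice (together with its internal randomization) is revealed before the reward $Y_\ell$. Under that filtration $X_\ell$ is predictable while $\condE{\eta_\ell}{\text{past}} = 0$ and the sub-Gaussianity of $\eta_\ell$ are preserved, so the hypotheses of the tail bound hold; the remainder is routine linear-bandit algebra, and the anytime (uniform-in-$t$) nature of the inequality is what lets a single $\delta$ cover the whole intersection $E$.
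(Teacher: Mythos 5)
Your proof is correct and takes essentially the same route as the paper's: the paper applies Cauchy--Schwarz and then invokes Theorem 2 of \citet{abbasi-yadkori11improved} with $R = 1/2$ as a black box, while you simply inline that theorem's proof (the noise-plus-regularization-bias decomposition, the Theorem~1 self-normalized martingale bound, and the determinant--trace inequality, including the harmless $d \geq 2$ absorption implicit in the paper's stated $c_1$). Your explicit care with the filtration --- revealing the round-$\ell$ pseudo-reward randomization before $Y_\ell$ so that $X_\ell$ is predictable and the noise remains conditionally centered and $\tfrac{1}{2}$-sub-Gaussian --- addresses a detail the paper's one-line citation glosses over, but it does not constitute a different approach.
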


The next lemma, together with the union bound over all arms, guarantees that $p_2 = 1 / n^2$ for $c_2$ in \cref{tab:constants}. This lemma is a key part of our analysis.

\begin{lemma}[Concentration]
\label{lem:theta tilde concentration} For any $t > d$, $c > 0$, and vector $x \in \realset^d$, we have
\begin{align*}
  \probt{\abs{x\T \tilde{\theta}_t - x\T \bar{\theta}_t} \geq
  c \normw{x}{G_t^{-1}}}
  \leq 2 \exp\left[- 2 c^2 / a\right]\,.
\end{align*}
\end{lemma}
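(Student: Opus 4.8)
The plan is to recognize $x\T \tilde\theta_t - x\T \bar\theta_t$ as a weighted sum of independent, bounded, centered random variables conditionally on the past, and then to apply Hoeffding's inequality. Subtracting the definitions of $\tilde\theta_t$ in \eqref{eq:theta tilde} and $\bar\theta_t$ in \eqref{eq:theta hat}, the reward terms $Y_\ell$ cancel and only the pseudo-reward perturbations survive, giving
\begin{align*}
  x\T \tilde\theta_t - x\T \bar\theta_t
  = \sum_{\ell = 1}^{t - 1} \sum_{j = 1}^a w_\ell \, (Z_{j, \ell} - \bar Z_{j, \ell})\,,
\end{align*}
where $w_\ell = x\T G_t^{-1} X_\ell$ and $\bar Z_{j, \ell} = 1/2$. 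The crucial observation is that, conditionally on $\cF_{t - 1}$, every coefficient $w_\ell$ is deterministic (both $G_t$ and each $X_\ell$ are determined by the arms pulled up to round $t - 1$, and $x$ is fixed), whereas the pseudo-rewards $Z_{j, \ell} \sim \mathrm{Ber}(1/2)$ are freshly drawn and independent of $\cF_{t - 1}$. Hence, under $\probt{\cdot}$, this is an honest sum of $a(t - 1)$ independent summands $w_\ell (Z_{j, \ell} - 1/2)$, each supported on an interval of length $\abs{w_\ell}$ since $Z_{j, \ell} - 1/2 \in \{-1/2, 1/2\}$.

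Next I would invoke Hoeffding's inequality for bounded independent summands, which yields for any $s > 0$
\begin{align*}
  \probt{\abs{x\T \tilde\theta_t - x\T \bar\theta_t} \ge s}
  \le 2 \exp\!\left(- \frac{2 s^2}{\sum_{\ell = 1}^{t - 1} \sum_{j = 1}^a w_\ell^2}\right)
  = 2 \exp\!\left(- \frac{2 s^2}{a \sum_{\ell = 1}^{t - 1} w_\ell^2}\right)\,,
\end{align*}
where the inner sum over $j$ simply multiplies $\sum_\ell w_\ell^2$ by $a$. It then remains to control $\sum_\ell w_\ell^2$ by the Mahalanobis norm. Writing $\sum_\ell w_\ell^2 = x\T G_t^{-1} \big(\sum_\ell X_\ell X_\ell\T\big) G_t^{-1} x$ and substituting the identity $\sum_\ell X_\ell X_\ell\T = G_t - \lambda I_d$ (for the unscaled $G_t$ used in the analysis) gives
\begin{align*}
  \sum_{\ell = 1}^{t - 1} w_\ell^2
  = x\T G_t^{-1} x - \lambda \, x\T G_t^{-2} x
  \le \normw{x}{G_t^{-1}}^2\,,
\end{align*}
since $\lambda \, x\T G_t^{-2} x \ge 0$. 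Setting $s = c \normw{x}{G_t^{-1}}$ and using this bound makes the exponent at least $2 c^2 / a$, which is exactly the claimed tail.

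I do not expect a serious obstacle in this direction: the statement is essentially Hoeffding applied to the pseudo-reward randomness, and the only mild subtlety is the conditioning argument, namely being careful that everything except the $Z_{j, \ell}$ is $\cF_{t - 1}$-measurable so the summands are genuinely independent with deterministic weights. The algebraic identity $\sum_\ell X_\ell X_\ell\T = G_t - \lambda I_d$ together with discarding the nonnegative term $\lambda \, x\T G_t^{-2} x$ are the two routine facts that tie the variance proxy to $\normw{x}{G_t^{-1}}^2$. The genuinely hard and novel part of the Bernoulli analysis is not this concentration bound but its anti-concentration counterpart (the lower bound feeding $p_3$), where one must bound the overestimation probability from below rather than from above, and Hoeffding no longer suffices.
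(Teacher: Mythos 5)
Your proposal is correct and matches the paper's proof essentially line for line: the same reduction of $x\T \tilde{\theta}_t - x\T \bar{\theta}_t$ to the centered pseudo-reward sum $D$ with $\cF_{t-1}$-measurable weights $w_\ell = x\T G_t^{-1} X_\ell$, the same application of Hoeffding's inequality with variance proxy $a \sum_\ell w_\ell^2$, and the same bound $\sum_\ell w_\ell^2 \leq \normw{x}{G_t^{-1}}^2$ obtained by absorbing the nonnegative $\lambda$-term (the paper's inequality \eqref{eq:cancel G inverse}). You even flag the same subtlety the paper emphasizes, namely that the $Z_{j,\ell}$ are freshly sampled each round so the summands are genuinely independent given the past.
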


The next lemma bounds $p_3$ from below. This lemma is another key part of our analysis.

\begin{lemma}[Anti-concentration]
\label{lem:theta tilde anti-concentration} For any round $t > d$, constants $a$ and $c$ such that $2 a \log n > c^2 > 0$, and vector $x \in \realset^d$ such that $x \neq \mathbf{0}$, we have
\begin{align*}
  & \probt{x\T \tilde{\theta}_t - x\T \bar{\theta}_t > c \normw{x}{G_t^{-1}}} \\
  & \quad \geq \frac{1}{16 \log n} (1 - \lambda \lambda_{\min}^{-1}(G_{d + 1}) -
  4 a^{-1} c^2 - 8 a n^{-3})\,.
\end{align*}
\end{lemma}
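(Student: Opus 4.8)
The plan is to reduce the statement to an anti-concentration bound for a weighted sum of centered Bernoulli variables, and then to prove that bound by a truncated second-moment (Paley--Zygmund-type) argument whose truncation level is calibrated to produce exactly the constants claimed. Writing $w_\ell = x\T G_t^{-1} X_\ell$ and $B_\ell = \sum_{j=1}^a (Z_{j,\ell} - 1/2)$, subtracting \eqref{eq:theta hat} from \eqref{eq:theta tilde} gives
\[
  S := x\T \tilde{\theta}_t - x\T \bar{\theta}_t = \sum_{\ell=1}^{t-1} w_\ell B_\ell,
\]
a sum of the conditionally independent, mean-zero, \emph{symmetric} variables $B_\ell$. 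First I would exploit this symmetry: since $-S$ has the same conditional law as $S$, $\probt{S > \tau} = \tfrac12 \probt{\abs{S} > \tau}$ with $\tau = c\normw{x}{G_t^{-1}}$, which turns the one-sided claim into a two-sided one. Next I compute $\sigma^2 := \condEt{S^2} = \tfrac a4 \sum_\ell w_\ell^2$ and relate $\sum_\ell w_\ell^2 = x\T G_t^{-1}(G_t - \lambda I_d) G_t^{-1} x = \normw{x}{G_t^{-1}}^2 - \lambda \norm{G_t^{-1} x}_2^2$ to $\normw{x}{G_t^{-1}}^2$; bounding $\norm{G_t^{-1}x}_2^2 \le \lambda_{\min}^{-1}(G_t)\normw{x}{G_t^{-1}}^2$ and using $\lambda_{\min}(G_t) \ge \lambda_{\min}(G_{d+1})$ for $t > d$ yields $\sigma^2 \ge \bar{\sigma}^2\,(1 - \lambda\lambda_{\min}^{-1}(G_{d+1}))$, where $\bar{\sigma}^2 := \tfrac a4 \normw{x}{G_t^{-1}}^2 \ge \sigma^2$. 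This is the origin of the $1 - \lambda\lambda_{\min}^{-1}(G_{d+1})$ factor.

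The core step is the truncated second-moment bound. I would set the truncation level $R = \sqrt{2a\log n}\,\normw{x}{G_t^{-1}}$, which exceeds $\tau$ precisely because $2a\log n > c^2$, and split
\[
  \sigma^2 = \condEt{S^2 \I{\abs{S} \le \tau}} + \condEt{S^2\I{\tau < \abs{S} \le R}} + \condEt{S^2 \I{\abs{S} > R}}.
\]
Bounding the first term by $\tau^2$ and the second by $R^2 \probt{\abs{S} > \tau} = 2R^2\probt{S>\tau}$ and rearranging gives
\[
  \probt{S > \tau} \ge \frac{\sigma^2 - \tau^2 - \condEt{S^2\I{\abs{S}>R}}}{2R^2} = \frac{1}{16\log n}\cdot\frac{\sigma^2 - \tau^2 - \condEt{S^2\I{\abs{S}>R}}}{\bar{\sigma}^2},
\]
where the last equality uses $2R^2 = 16\log n\cdot\bar{\sigma}^2$. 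Dividing the three numerator terms by $\bar{\sigma}^2$ produces exactly $1 - \lambda\lambda_{\min}^{-1}(G_{d+1})$, then $\tau^2/\bar{\sigma}^2 = 4a^{-1}c^2$, and finally the error term, matching the claimed expression.

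It then remains to show the tail contribution is negligible, namely $\condEt{S^2\I{\abs{S}>R}} \le 8a n^{-3}\bar{\sigma}^2$. Since each summand $w_\ell(Z_{j,\ell}-1/2)$ lies in an interval of length $\abs{w_\ell}$, the variable $S$ is conditionally sub-Gaussian, and in fact \cref{lem:theta tilde concentration} gives $\probt{\abs{S}>s} \le 2\exp(-2s^2/(a\normw{x}{G_t^{-1}}^2))$. Writing $\condEt{S^2\I{\abs{S}>R}} = R^2\probt{\abs{S}>R} + \int_R^\infty 2s\,\probt{\abs{S}>s}\dif s$ and using $\exp(-2R^2/(a\normw{x}{G_t^{-1}}^2)) = n^{-4}$, a direct integral bounds this by $a\normw{x}{G_t^{-1}}^2\,n^{-4}(4\log n + 1)$, which is well below $8an^{-3}\bar{\sigma}^2$ for the admissible range of $n$. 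Chaining all the inequalities yields the stated bound.

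The hard part is the middle paragraph: finding the \emph{single} truncation level $R = \sqrt{2a\log n}\,\normw{x}{G_t^{-1}}$ that simultaneously (i) lies above $\tau$, which is exactly what forces the hypothesis $2a\log n > c^2$; (ii) makes the prefactor collapse to $1/(16\log n)$; and (iii) leaves a tail error of order $n^{-3}$. It is this truncation, rather than a fourth-moment (kurtosis) bound, that injects the $\log n$ loss and keeps the anti-concentration argument elementary; I expect the main work to be verifying these three properties hold together and handling the conditional-variance correction cleanly.
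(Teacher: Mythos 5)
Your proposal is correct and follows essentially the same route as the paper: the identical three-region truncation of the conditional second moment at the levels $c \normw{x}{G_t^{-1}}$ and $\sqrt{2 a \log n}\, \normw{x}{G_t^{-1}}$ (the paper's events $F_1$ and $F_2$), the same variance identity with the $\lambda x\T G_t^{-2} x$ correction bounded via $\lambda_{\min}(G_t) \geq \lambda_{\min}(G_{d + 1})$, and the same symmetry step converting the two-sided bound into the one-sided claim with the final $1 / (16 \log n)$ factor. The only deviation is your treatment of the tail term $\condEt{S^2 \I{\abs{S} > R}}$ by integrating the sub-Gaussian tail from \cref{lem:theta tilde concentration}, where the paper instead combines the deterministic bound $\abs{D} \leq a \sqrt{n} \normw{x}{G_t^{-1}}$ with $\probt{\bar{F}_2} \leq 2 n^{-4}$; both yield the $8 a n^{-3}$ term (yours after the minor verification that $(4 \log n + 1) / n \leq 2 a$, which holds for all $n \geq 1$ since $a \geq 1$), so the difference is cosmetic.
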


For $\lambda = \lambda_{\min}(G_{d + 1}) / 4$, $a = \ceils{16 \, c_1^2}$, $c = c_1$, and any $x_1 \neq \mathbf{0}$, \cref{lem:theta tilde anti-concentration} implies that
\begin{align*}
  p_3 - p_2 \geq
  \frac{1 / 2 - 128 \, c_1^2 n^{-3} - 16 \, n^{-2} \log n}{16 \log n}\,.
\end{align*}
Since $\lambda_{\min}(G_{d + 1}) = \lambda + \lambda_{\min}(\sum_{\ell = 1}^d X_\ell X_\ell\T)$, we have that $\lambda = \lambda_{\min}(\sum_{\ell = 1}^d X_\ell X_\ell\T) / 3$. Finally, we set $c_3$ as in \cref{tab:constants}. Now are ready to prove \cref{thm:linphe upper bound}.

\begin{proof}[Proof of \cref{thm:linphe upper bound}]
If $x_1 = \mathbf{0}$, the proof is trivial. Now suppose that $x_1 \neq \mathbf{0}$. Since $L = O(\sqrt{d})$, $L_\ast = O(\sqrt{d})$, and $\lambda = O(1)$, we have $c_1 = \tilde{O}(\sqrt{d})$. Moreover, since $a = \ceils{16 \, c_1^2}$, we have $c_2 = \tilde{O}(\sqrt{d})$. Finally, $c_3 = \tilde{O}(d)$.

Now we show that $1 + 1 / (p_3 - p_2) = \tilde{O}(1)$. Trivially, $n^{-1} \log n \leq 1$ for $n \geq 1$. In addition, for $n \geq 8 \sqrt{2} c_1$, $128 \, c_1^2 n^{-2} \leq 1$. So, for any such $n$,
\begin{align*}
  p_3 - p_2 \geq
  \frac{1 / 2 - 17 \, n^{-1}}{16 \log n}\,.
\end{align*}
Finally, for any $n > 34$, the above lower bound is positive and $1 + 1 / (p_3 - p_2) = \tilde{O}(1)$. This concludes our proof.
\end{proof}


\section{EXPERIMENTS}
\label{sec:experiments}

\begin{figure*}[t]
  \centering
  \includegraphics[width=6.4in]{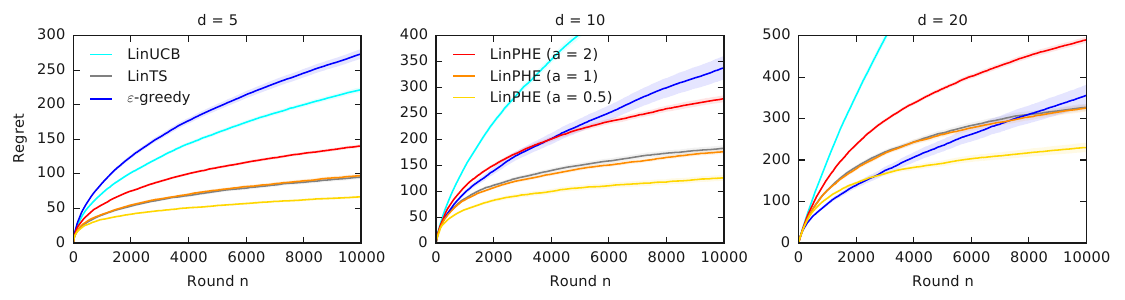}
  \vspace{-0.1in}
  \caption{Comparison of $\linphe$ to several baselines in three linear bandit problems. All results are averaged over $100$ random problem instances. The shaded areas are standard errors of the estimates. The legend is split between the first two plots.}
  \label{fig:linear bandit}
\end{figure*}

We conduct two experiments to evaluate both $\linphe$ and $\logphe$ in terms of their regret. The algorithms are compared to several state-of-the-art baselines.

\subsection{Linear Bandit}
\label{sec:linear bandit experiments}

The first experiment is with linear bandits. We experiment with dimensions $d$ from $5$ to $20$. The number of arms is $K = 100$. To avoid biases, we randomly generate problem instances. Each instance is generated as follows. The first $d - 1$ entries of feature vector $x_i$ are drawn from a unit $(d - 2)$-sphere and the last entry is $1$. The first $d - 1$ entries of parameter vector $\theta_\ast$ are drawn from a $(d - 2)$-sphere with radius $0.5$ and the last entry is $0.5$. This construction guarantees that $x_i\T \theta_\ast \in [0, 1]$ for all arms $i$. The reward of arm $i$ is drawn i.i.d.\ from $\mathrm{Ber}(x_i\T \theta_\ast)$. The horizon is $n = 10000$ rounds and our results are averaged over $100$ problem instances.

We compare $\linphe$ to $\linucb$ \cite{abbasi-yadkori11improved}, $\lints$ \cite{agrawal13thompson}, and the $\varepsilon$-greedy policy \cite{sutton98reinforcement,auer02finitetime} with a linear model. $\linucb$ is an OFU algorithm. We set the regularization parameter in $\linucb$ as $\lambda = 1$. All other parameters are set as in \citet{abbasi-yadkori11improved}. $\lints$ is a posterior sampling algorithm. We set its prior to $\mathcal{N}(0, I_d)$. The exploration rate in the $\varepsilon$-greedy policy is $\varepsilon_t = \min \{1, 0.05 / (2 \sqrt{t})\}$, which results in about $5\%$ exploration. We experiment with three practical perturbation scales $a$ in $\linphe$: $2$, $1$, and $0.5$. We implement $\linphe$ with non-integer perturbation scales $a$ by replacing $B(a \, T_{i, t - 1}, 1 / 2)$ in \cref{sec:efficient implementation} with $B(\ceils{a \, T_{i, t - 1}}, 1 / 2)$.

Our results are reported in \cref{fig:linear bandit}. We observe the following trends. First, $\linphe$ outperforms $\linucb$ at all perturbation scales $a$. Second, $\linphe$ outperforms the $\varepsilon$-greedy policy at all perturbation scales $a$ in the first two problems. In the last problem, this happens only at $a \leq 1$. Finally, $\linphe$ performs similarly to $\lints$ at $a = 1$ and outperforms it at $a = 0.5$. However, the run time of $\linphe$ is less than a half of that of $\lints$. For instance, at $d = 5$, the average run times of $\linphe$ and $\lints$ are $113$ and $273$ seconds, respectively. The increased run time of $\lints$ is due to posterior sampling from the multivariate normal distribution.

\subsection{Logistic Bandit}
\label{sec:logistic bandit experiments}

The last experiment is with logistic bandits (\cref{sec:algorithm logphe}). The experimental setup differs from \cref{sec:linear bandit experiments} only in how $\theta_\ast$ is generated. The first $d - 1$ entries of parameter vector $\theta_\ast$ are drawn from a $(d - 2)$-sphere with radius $1.5$ and the last entry is $0$. By design, $\abs{x_i\T \theta_\ast} \leq 1.5$.

We compare $\logphe$ to $\logts$ \cite{chapelle11empirical,russo18tutorial}, $\glmucb$ \cite{filippi10parametric}, $\ucbglm$ \cite{li17provably}, and the $\varepsilon$-greedy policy \cite{sutton98reinforcement,auer02finitetime} with a logistic model. $\glmucb$ and $\ucbglm$ are OFU methods for logistic bandits. We implement them with regularization (\cref{sec:algorithm logphe}) and $\lambda = 1$. The minimum derivative of the mean function, which is tunable in both methods, is set to the most optimistic value of $1 / 4$. All other parameters are set as suggested by theory. $\logts$ is a posterior sampling algorithm for logistic regression, which uses the Laplace approximation and has prior $\mathcal{N}(0, I_d)$. The $\varepsilon$-greedy policy is implemented as in \cref{sec:linear bandit experiments}.

Our results are reported in \cref{fig:logistic bandit}. We observe similar trends to \cref{sec:linear bandit experiments}. In particular, $\logphe$ usually outperforms OFU algorithms and is competitive with posterior sampling when $a \leq 1$. In summary, our experimental results show that both $\linphe$ and $\logphe$ perform well, and are comparable to or better than existing algorithms.

\begin{figure*}[t]
  \centering
  \includegraphics[width=6.4in]{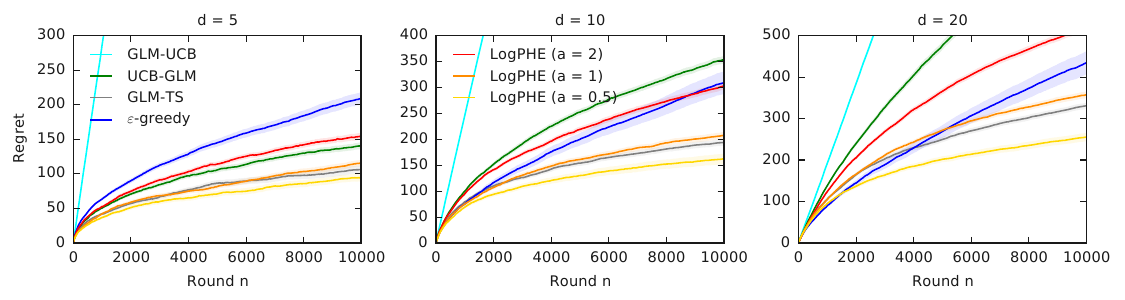}
  \vspace{-0.1in}
  \caption{Comparison of $\logphe$ to several baselines in three logistic bandit problems. All results are averaged over $100$ random problem instances. The shaded areas are standard errors of the estimates. The legend is split between the first two plots.}
  \label{fig:logistic bandit}
\end{figure*}


\section{RELATED WORK}
\label{sec:related work}

Our work is motivated by \citet{kveton19perturbed}, who proposed a multi-armed bandit algorithm that pulls the arm with the highest average reward in its perturbed history with i.i.d.\ pseudo-rewards. We generalize this approach to linear, and more generally contextual, bandits. This generalization is important. While the perturbed history is conceptually simple, it is unclear how to extend it to structured problems, and assessing if such a generalization is sound is non-trivial. We propose one generalization, and show it to be both sound and effective.

Our work is related to posterior sampling. In particular, let $\mu \sim \mathcal{N}(\mu_0, \sigma^2)$ and $(Y_\ell)_{\ell = 1}^s \sim \mathcal{N}(\mu, \sigma^2)$ be $s$ i.i.d.\ noisy observations of $\mu$. Then the posterior distribution of $\mu$ conditioned on $(Y_\ell)_{\ell = 1}^s$ is
\begin{align}
  \mathcal{N}\left(\frac{\mu_0 + \sum_{\ell = 1}^s Y_\ell}{s + 1}, \,
  \frac{\sigma^2}{s + 1}\right)\,.
  \label{eq:normal posterior}
\end{align}
It is easy to see that the above distribution can be indirectly sampled from as follows. First, draw $s + 1$ i.i.d.\ samples $(Z_\ell)_{\ell = 0}^s \sim \mathcal{N}(0, \sigma^2)$. Then
\begin{align*}
  \frac{\mu_0 + \sum_{\ell = 1}^s Y_\ell + \sum_{\ell = 0}^s Z_\ell}{s + 1}
\end{align*}
is a sample from \eqref{eq:normal posterior}. This equivalence can be generalized to linear models with Gaussian noise \cite{lu17ensemble}. Unfortunately, it holds only for normal random variables. Therefore, it cannot justify our perturbation scheme as a form of posterior sampling.

The design of $\linphe$ is similar to \emph{follow the perturbed leader (FPL)} \cite{hannan57approximation,kalai05efficient}. FPL has been traditionally studied in the \emph{non-stochastic full-information} setting. \citet{neu13efficient} extended it to \emph{semi-bandits} using geometric resampling. Their algorithm cannot solve our problem efficiently because it is for a $K$-armed bandit with independent arms.

Our work is closely related to bootstrapping exploration \cite{baransi14subsampling,eckles14thompson,osband15bootstrapped,tang15personalized,elmachtoub17practical,kveton19garbage,vaswani18new}, where the learning agent perturbs its history of observations by resampling in order to achieve exploration. Contextual bootstrapping algorithms \cite{tang15personalized,elmachtoub17practical,kveton19garbage,vaswani18new} have superior empirical performance but no regret bounds. Our work provides a stepping stone for the analysis of such algorithms, since our perturbation scheme is similar but simpler.


\section{CONCLUSIONS}
\label{sec:conclusions}

We propose $\linphe$, a new online algorithm for cumulative regret minimization in stochastic linear bandits. The key idea in $\linphe$ is to perturb the history in round $t$ by $O(t)$ i.i.d.\ pseudo-rewards, which are drawn from the maximum variance distribution. We derive a $\tilde{O}(d \sqrt{n})$ bound on the $n$-round regret of $\linphe$, where $d$ is the number of features. We also propose $\logphe$, a natural generalization of $\linphe$ to a logistic model. We evaluate $\linphe$ and $\logphe$ empirically. Both algorithms are competitive with Thompson sampling, although they are derived based on different insights.

$\linphe$ can be easily extended to any linear model with a bounded support. In particular, if $Y_{i, t} \in [m, M]$, $Y_\ell$ in $\linphe$ should be replaced with $(Y_\ell - m) / (M - m)$.

Our work can be extended in several directions. First, although we propose $\logphe$ for a logistic model, we do not analyze it. We believe that the regret analysis is possible because generalized linear bandit analyses \cite{filippi10parametric,li17provably} are similar to linear bandit analyses \cite{dani08stochastic,abbasi-yadkori11improved}. Second, the theory-suggested perturbation scale $a$ in \cref{tab:constants} is too conservative to be practical, for the same reason as the analyzed variant of $\lints$ in \citet{agrawal13thompson}. A tighter analysis should be possible. Third, our key technical lemmas, \cref{lem:theta tilde concentration,lem:theta tilde anti-concentration}, can be extended to other randomized pseudo-rewards than Bernoulli. This would be necessary for other generalized linear models than logistic. Finally, our design seems conservative since the strategy for adding pseudo-rewards does not adapt over time. More adaptive designs may be possible.


\appendix

\section{PROOFS}
\label{sec:proofs}

\subsection{Proof of \cref{lem:per-round regret}}

Let
\begin{align}
  \bar{S}_t
  = \set{i \in [K]: (c_1 + c_2) \normw{x_i}{G_t^{-1}} \geq \Delta_i}
  \label{eq:undersampled arms}
\end{align}
be the set of \emph{undersampled arms} in round $t$. Note that by definition $1 \in \bar{S}_t$. The set of \emph{sufficiently sampled arms} is defined as $S_t = [K] \setminus \bar{S}_t$. Let
\begin{align}
  \textstyle
  J_t
  = \argmin_{i \in \bar{S}_t} \normw{x_i}{G_t^{-1}}
  \label{eq:lest uncertain undersampled arm}
\end{align}
be the \emph{least uncertain undersampled arm} in round $t$. In all steps below, we assume that event $E_{1, t}$ occurs.

Let $c = c_1 + c_2$. In round $t$ on event $E_{2, t}$,
\begin{align*}
  \Delta_{I_t}
  = {} & \Delta_{J_t} + \xmin{x_{J_t}\T \theta_\ast - x_{I_t}\T \theta_\ast} \\
  \leq {} & \Delta_{J_t} +
  \xmin{x_{J_t}\T \tilde{\theta}_t - x_{I_t}\T \tilde{\theta}_t} + {} \\
  & c \, (\xmin{\normw{x_{I_t}}{G_t^{-1}}} +
  \xmin{\normw{x_{J_t}}{G_t^{-1}}}) \\
  \leq {} & c \, (\xmin{\normw{x_{I_t}}{G_t^{-1}}} +
  2 \xmin{\normw{x_{J_t}}{G_t^{-1}}})\,,
\end{align*}
where the first inequality is by the definitions of events $E_{1, t}$ and $E_{2, t}$, and the second follows from the definitions of $I_t$ and $J_t$. We also used that $c = c_1 + c_2 \geq 1$. Now we take the expectation of both sides and get
\begin{align*}
  & \Et{\Delta_{I_t}} \\
  & \quad = \Et{\Delta_{I_t} \I{E_{2, t}}} +
  \Et{\Delta_{I_t} \I{\bar{E}_{2, t}}} \\
  & \quad \leq c \, \Et{\xmin{\normw{x_{I_t}}{G_t^{-1}}} +
  2 \xmin{\normw{x_{J_t}}{G_t^{-1}}}} + \probt{\bar{E}_{2, t}}\,.
\end{align*}
The last step is to bound $\Et{\xmin{\normw{x_{J_t}}{G_t^{-1}}}}$ from above. The key observation is that
\begin{align*}
  & \Et{\xmin{\norm{x_{I_t}}_{G_t^{-1}}}} \\
  & \quad \geq \Et{\xmin{\norm{x_{I_t}}_{G_t^{-1}}} \,\middle|\, I_t \in \bar{S}_t}
  \probt{I_t \in \bar{S}_t} \\
  & \quad \geq \xmin{\norm{x_{J_t}}_{G_t^{-1}}} \, \probt{I_t \in \bar{S}_t}\,,
\end{align*}
where the last inequality is from the definition of $J_t$ and that $\bar{S}_t$ is $\cF_{t - 1}$-measurable. We rearrange the inequality and get
\begin{align*}
  \xmin{\norm{x_{J_t}}_{G_t^{-1}}}
  \leq \Et{\xmin{\norm{x_{I_t}}_{G_t^{-1}}}} \Big/ \
  \probt{I_t \in \bar{S}_t}\,.
\end{align*}
Next we bound $\probt{I_t \in \bar{S}_t}$ from below. On event $E_{1, t}$,
\begin{align*}
  \probt{I_t \in \bar{S}_t}
  & \geq \probt{\exists i \in \bar{S}_t: x_i\T \tilde{\theta}_t >
  \max_{j \in S_t} x_j\T \tilde{\theta}_t} \\
  & \geq \probt{x_1\T \tilde{\theta}_t >
  \max_{j \in S_t} x_j\T \tilde{\theta}_t} \\
  & \geq \probt{x_1\T \tilde{\theta}_t >
  \max_{j \in S_t} x_j\T \tilde{\theta}_t, \, E_{2, t} \text{ occurs}} \\
  & \geq {} \probt{x_1\T \tilde{\theta}_t >
  x_1\T \theta_\ast, \, E_{2, t} \text{ occurs}} \\
  & \geq \probt{x_1\T \tilde{\theta}_t > x_1\T \theta_\ast} - \probt{\bar{E}_{2, t}}\,.
\end{align*}
Note that we require a sharp inequality because $x_i\T \tilde{\theta}_t \geq \max_{j \in S_t} x_j\T \tilde{\theta}_t$ does not imply that arm $i$ is pulled. The fourth inequality holds because for any $j \in S_t$,
\begin{align*}
  x_j\T \tilde{\theta}_t
  \leq x_j\T \theta_\ast + c \normw{x_j}{G_t^{-1}}
  < x_j\T \theta_\ast + \Delta_j
  = x_1\T \theta_\ast
\end{align*}
on event $E_{1, t} \cap E_{2, t}$. Finally,
\begin{align*}
  \probt{x_1\T \tilde{\theta}_t > x_1\T \theta_\ast} 
  \geq \probt{x_1\T \tilde{\theta}_t -
  x_1\T \bar{\theta}_t > c_1 \normw{x_1}{G_t^{-1}}}
\end{align*}
on event $E_{1, t}$, because $x_1\T \theta_\ast \leq x_1\T \bar{\theta}_t + c_1 \normw{x_1}{G_t^{-1}}$ holds on event $E_{1, t}$. Now we chain all inequalities and use the definitions of $p_1$, $p_2$, and $p_3$ to complete the proof.

\subsection{Proof of \cref{lem:theta hat concentration}}

By the Cauchy-Schwarz inequality,
\begin{align*}
  x_i\T \bar{\theta}_t - x_i\T \theta_\ast
  & = x_i\T G_t^{- \frac{1}{2}} G_t^\frac{1}{2} (\bar{\theta}_t - \theta_\ast) \\
  & \leq \|\bar{\theta}_t - \theta_\ast\|_{G_t} \normw{x_i}{G_t^{-1}}\,.
\end{align*}
Now note that the least-squares estimate $\bar{\theta}_t$ is computed from sub-Gaussian rewards with variance proxy $1 / 4$. As a result, by Theorem 2 of \citet{abbasi-yadkori11improved} for $R = 1 / 2$, $\|\bar{\theta}_t - \theta_\ast\|_{G_t} \leq c_1$ holds jointly in all rounds $t \leq n$ with probability of at least $1 - \delta$. This concludes the proof.

\subsection{Proof of \cref{lem:theta tilde concentration}}

Let
\begin{align*}
  U
  & = \sum_{\ell = 1}^{t - 1} \sum_{j = 1}^a
  x\T G_t^{-1} X_\ell Z_{j, \ell}\,, \\
  \bar{U}
  & = \sum_{\ell = 1}^{t - 1} \sum_{j = 1}^a
  x\T G_t^{-1} X_\ell \bar{Z}_{j, \ell}\,,
\end{align*}
and $D = U - \bar{U}$. Then by Hoeffding's inequality,
\begin{align*}
  & \probt{\abs{x\T \tilde{\theta}_t - x\T \bar{\theta}_t} \geq
  c \normw{x}{G_t^{-1}}} \\
  & \quad = \probt{\abs{D} \geq c \normw{x}{G_t^{-1}}} \\
  & \quad \leq 2 \exp\left[- \frac{2 c^2 \normw{x}{G_t^{-1}}^2}
  {a \sum_{\ell = 1}^{t - 1} x\T G_t^{-1} X_\ell X_\ell\T G_t^{-1} x}\right]\,.
\end{align*}
This step of the proof relies on the fact that new $Z_{j, \ell}$ are generated in each round $t$. Also note that
\begin{align}
  & \sum_{\ell = 1}^{t - 1} x\T G_t^{-1} X_\ell X_\ell\T G_t^{-1} x
  \label{eq:cancel G inverse} \\
  & \quad \leq x\T G_t^{-1} \left(\sum_{\ell = 1}^{t - 1}
  X_\ell X_\ell\T + \lambda I_d\right) G_t^{-1} x
  = \normw{x}{G_t^{-1}}^2\,.
  \nonumber
\end{align}
Our claim follows from chaining all above inequalities.

\subsection{Proof of \cref{lem:theta tilde anti-concentration}}

Let $U$, $\bar{U}$, and $D$ be defined as in the proof of \cref{lem:theta tilde concentration}. Then $x\T \tilde{\theta}_t - x\T \bar{\theta}_t = D$. We also define events
\begin{align*}
  F_1
  & = \set{\abs{D} \leq c \normw{x}{G_t^{-1}}}\,, \\
  F_2
  & = \set{\abs{D} \leq \sqrt{2 a \log n} \normw{x}{G_t^{-1}}}\,.
\end{align*}
Since $2 a \log n > c^2$, $F_1 \subset F_2$. Then
\begin{align*}
  \condvar{U}{\cF_{t - 1}}
  = {} & \Et{D^2 \I{F_1}} + {} \\
  & \Et{D^2 \I{\bar{F}_1, F_2}} + {} \\
  & \Et{D^2 \I{\bar{F}_2}}\,.
\end{align*}
Now we bound each term on the right-hand side of the above equality from above. From the definition of event $F_1$, term $1$ is bounded as
\begin{align*}
  \Et{D^2 \I{F_1}}
  \leq c^2 \normw{x}{G_t^{-1}}^2\,.
\end{align*}
By the definition of $F_1$ and $F_2$, term $2$ is bounded as
\begin{align*}
  & \Et{D^2 \I{\bar{F}_1, F_2}} \\
  & \quad \leq (2 a \normw{x}{G_t^{-1}}^2 \log n) \,
  \probt{\bar{F}_1, F_2 \text{ occur}} \\
  & \quad \leq (2 a \normw{x}{G_t^{-1}}^2 \log n) \,
  \probt{\abs{D} > c \normw{x}{G_t^{-1}}}\,.
\end{align*}
Now we bound term $3$. First, note that
\begin{align*}
  \abs{D}
  & \leq a \sum_{\ell = 1}^{t - 1} \abs{x\T G_t^{-1} X_\ell} \\
  & \leq a \sqrt{n} \sqrt{\sum_{\ell = 1}^{t - 1}
  x\T G_t^{-1} X_\ell X_\ell\T G_t^{-1} x} \\
  & \leq a \sqrt{n} \normw{x}{G_t^{-1}}\,,
\end{align*}
where the last step follows from \eqref{eq:cancel G inverse}. Then, by the definition of event $F_2$ and \cref{lem:theta tilde concentration} for $c = \sqrt{2 a \log n}$,
\begin{align*}
  \Et{D^2 \I{\bar{F}_2}} 
  & \leq a^2 n \normw{x}{G_t^{-1}}^2 \probt{\bar{F}_2} \\
  & \leq \frac{2 a^2 \normw{x}{G_t^{-1}}^2}{n^3}\,.
\end{align*}
Finally, by the definition of $U$,
\begin{align*}
  \condvar{U}{\cF_{t - 1}}
  & = \frac{a}{4} \sum_{\ell = 1}^{t - 1}
  x\T G_t^{-1} X_\ell X_\ell\T G_t^{-1} x \\
  & = \frac{a}{4} \normw{x}{G_t^{-1}}^2 -
  \frac{a}{4} \lambda x\T G_t^{-2} x\,.
\end{align*}
We bound the last term from below as follows. For any positive semi-definite matrix $M \in \realset^{d \times d}$,
\begin{align*}
  x\T M^2 x
  & = \lambda_{\max}^2(M) \, x\T \left(\lambda_{\max}^{-2}(M) M^2\right) x \\
  & \leq \lambda_{\max}^2(M) \, x\T \left(\lambda_{\max}^{-1}(M) M\right) x \\
  & = \lambda_{\max}(M) \normw{x}{M}^2\,,
\end{align*}
where the inequality follows from the fact that all eigenvalues of $\lambda_{\max}^{-2}(M) M^2$ are in $[0, 1]$. We apply this upper bound for $M = G_t^{-1}$ and get that
\begin{align*}
  \condvar{U}{\cF_{t - 1}}
  & \geq \frac{a}{4} \normw{x}{G_t^{-1}}^2 -
  \frac{a \lambda}{4 \, \lambda_{\min}(G_t)} \normw{x}{G_t^{-1}}^2 \\
  & \geq \frac{a}{4} \normw{x}{G_t^{-1}}^2 -
  \frac{a \lambda}{4 \, \lambda_{\min}(G_{d + 1})} \normw{x}{G_t^{-1}}^2\,,
\end{align*}
where the last inequality is by $\lambda_{\min}(G_t) \geq \lambda_{\min}(G_{d + 1})$ and holds for any $t > d$.

Now we combine all above inequalities and get
\begin{align*}
  & \left[\frac{a}{4} - \frac{a \lambda}{4 \, \lambda_{\min}(G_{d + 1})} -
  c^2 - \frac{2 a^2}{n^3}\right] \normw{x}{G_t^{-1}}^2 \\
  & \quad \leq (2 a \normw{x}{G_t^{-1}}^2 \log n) \,
  \probt{\abs{D} > c \normw{x}{G_t^{-1}}}\,.
\end{align*}
Since $2 a \log n > 0$ and $\normw{x}{G_t^{-1}} > 0$, the above inequality can be simplified as
\begin{align*}
  & \probt{\abs{D} > c \normw{x}{G_t^{-1}}} \\
  & \quad \geq \frac{1}{8 \log n} (1 - \lambda \lambda_{\min}^{-1}(G_{d + 1}) -
  4 a^{-1} c^2 - 8 a n^{-3})\,.
\end{align*}
Finally, we note that the distribution of $D$ is symmetric. Therefore, for any $\eps > 0$, $\probt{\abs{D} > \eps} = 2 \probt{D > \eps}$. This completes the proof.

\newpage

\bibliographystyle{plainnat}
\bibliography{References}

\end{document}